%% file: main.tex
\documentclass[letterpaper, 10 pt, conference]{ieeeconf}  
\IEEEoverridecommandlockouts

\let\proof\relax
\let\endproof\relax
\usepackage{caption}
\usepackage{amsthm}
\usepackage[utf8]{inputenc}
\usepackage{amsmath,amssymb}
\usepackage{booktabs}
\usepackage{graphicx}
\usepackage[font=small]{subcaption}
\usepackage{numprint}
\usepackage{comment}
\usepackage[linesnumbered,ruled,noend]{algorithm2e}
\usepackage{eso-pic}
\usepackage[american]{babel}
\usepackage{svg}
\usepackage{xspace}

\usepackage{adjustbox}
\usepackage{tikz}
\usetikzlibrary{automata,positioning,shapes,arrows}

\usepackage[normalem]{ulem} 

\usepackage{tabularx,siunitx}
\usepackage{cite}
\usepackage{hyperref}
\usepackage{multirow}

\newcommand{\automaton}{\mathcal{A}}

\usepackage{xcolor}

\SetCommentSty{mycommfont}

\usepackage{adjustbox}
\usepackage{tikz}
\usetikzlibrary{automata,positioning,shapes,arrows}
\usepackage[normalem]{ulem} 
\usepackage{tabularx,siunitx}
\usepackage{cite}
\usepackage{hyperref}
\usepackage{multirow}

\newtheorem{theorem}{Theorem}
\newtheorem{problem}{Problem}
\newtheorem{example}{Example}

\newtheorem{proposition}{Proposition}

\newtheorem{lemma}{Lemma}
\newtheorem{definition}{Definition}
\newtheorem{remark}{Remark}

\DeclareMathOperator*{\argmax}{argmax}

\newcommand{\poshs}{\mathcal{H}}

\newcommand{\tree}{\mathcal{T}}

\newcommand{\poalg}{{SaBPI}\xspace}
\newcommand{\nhsalg}{{SaBRS}\xspace}
\newcommand{\Q}{\mathcal{Q}}
\newcommand{\eventually}{\lozenge}
\newcommand{\globally}{\square}

\newcommand{\tempop}[1]{\mathcal{#1}}
\newcommand{\until}{\,\mathcal{U}}
\newcommand{\policytree}{\pi}
\newcommand{\UCB}{\textsc{ucb}}
\newcommand{\ltl}{\textsc{ltl}\xspace}
\newcommand{\ltlf}{\textsc{ltl}_f\xspace}

\title{\LARGE \bf Sampling-based Task and Kinodynamic Motion Planning under Semantic Uncertainty
}

\author{Qi Heng Ho$^{1}$, Zachary N. Sunberg$^{2}$, and Morteza Lahijanian$^{2}$%
\thanks{$^{1}$Q. H. Ho is with the Kevin T. Crofton Department of Aerospace and Ocean Engineering, Virginia Tech, Blacksburg, VA, USA 
    {\tt\small qihengho@vt.edu}}%
\thanks{$^{2}$Z. N. Sunberg and M. Lahijanian are with the Smead Department of Aerospace Engineering Sciences, University of Colorado Boulder, CO, USA 
    {\tt\small \{zachary.sunberg, morteza.lahijanian\}@colorado.edu}}%
}

\begin{document}
    
    \maketitle
    
    \begin{abstract}
        This paper tackles the problem of integrated task and (kinodynamic) motion planning in uncertain environments.  We consider a robot with nonlinear dynamics tasked with a Linear Temporal Logic over finite traces ($\ltlf$) specification operating in a partially observable environment. 
        Specifically, the uncertainty is in the semantic labels of the environment.
        We show how the problem can be modeled as a Partially Observable Stochastic Hybrid System that captures the robot dynamics, $\ltlf$ task, and uncertainty in the environment state variables. We propose an anytime algorithm that takes advantage of the structure of the hybrid system, and combines the effectiveness of decision-making techniques and sampling-based motion planning. We prove the soundness and asymptotic optimality of the algorithm. Results  show the efficacy of our algorithm in uncertain environments, and that it consistently outperforms baseline methods.
    \end{abstract}

    \section{Introduction}
        \label{sec:intro}
        \input{sections/Introduction}


    \section{Problem Formulation}
        \label{sec:problem}
        \input{sections/Problem}

    \section{Approach}
        \label{sec:approach}
        \input{sections/Reformulation}

    \input{sections/Methodology}

    \input{sections/Analysis}

    \section{Evaluation}
        \label{sec:experiments}
        \input{sections/Evaluation}

    \section{Conclusion}
        \label{sec:conclusion}
        \input{sections/Conclusion}
    
    \bibliographystyle{IEEEtran}
    \bibliography{references}
    
\end{document}

%% file: sections/Introduction.tex
Planning for robots with complex nonlinear dynamics to accomplish high-level, long-horizon tasks is a fundamental challenge in robotics. Such tasks are often specified using formal languages like Linear Temporal Logic over finite traces ($\ltlf$)~\cite{ltlf}, which provides a precise and expressive framework for specifying sequences of events and properties over time. While many approaches exist, their applicability becomes far more challenging in uncertain, partially observable environments. In these settings, the robot is uncertain about its progression toward task completion, and the problem becomes a tightly coupled interplay between continuous dynamics, discrete task semantics, and stochastic belief evolution. Reasoning under such uncertainty while guaranteeing bounds on task satisfaction probability makes the problem particularly difficult. This paper develops an efficient framework for $\ltlf$ task and motion planning in uncertain, partially observable environments with formal guarantees on task success probability.

\begin{example}
    \label{ex: drone}
Consider a drone flying in a forested area, depicted in Fig.~\ref{fig: drone shs}, tasked with detecting possible fires at three sites. If fire is detected, the robot should proceed to exit A to report it; otherwise it should proceed to exit B. The presence of fire at each site is unknown, and the drone's sensor provides noisy observations. Observations taken from directly above the sites are less accurate than those taken at closer range, creating a tradeoff between flight path and information quality. Our goal is to enable optimal task and motion planning for such scenarios.
\end{example}

\begin{figure}[ht!]
    \centering
    \includegraphics[width=1.0\linewidth]{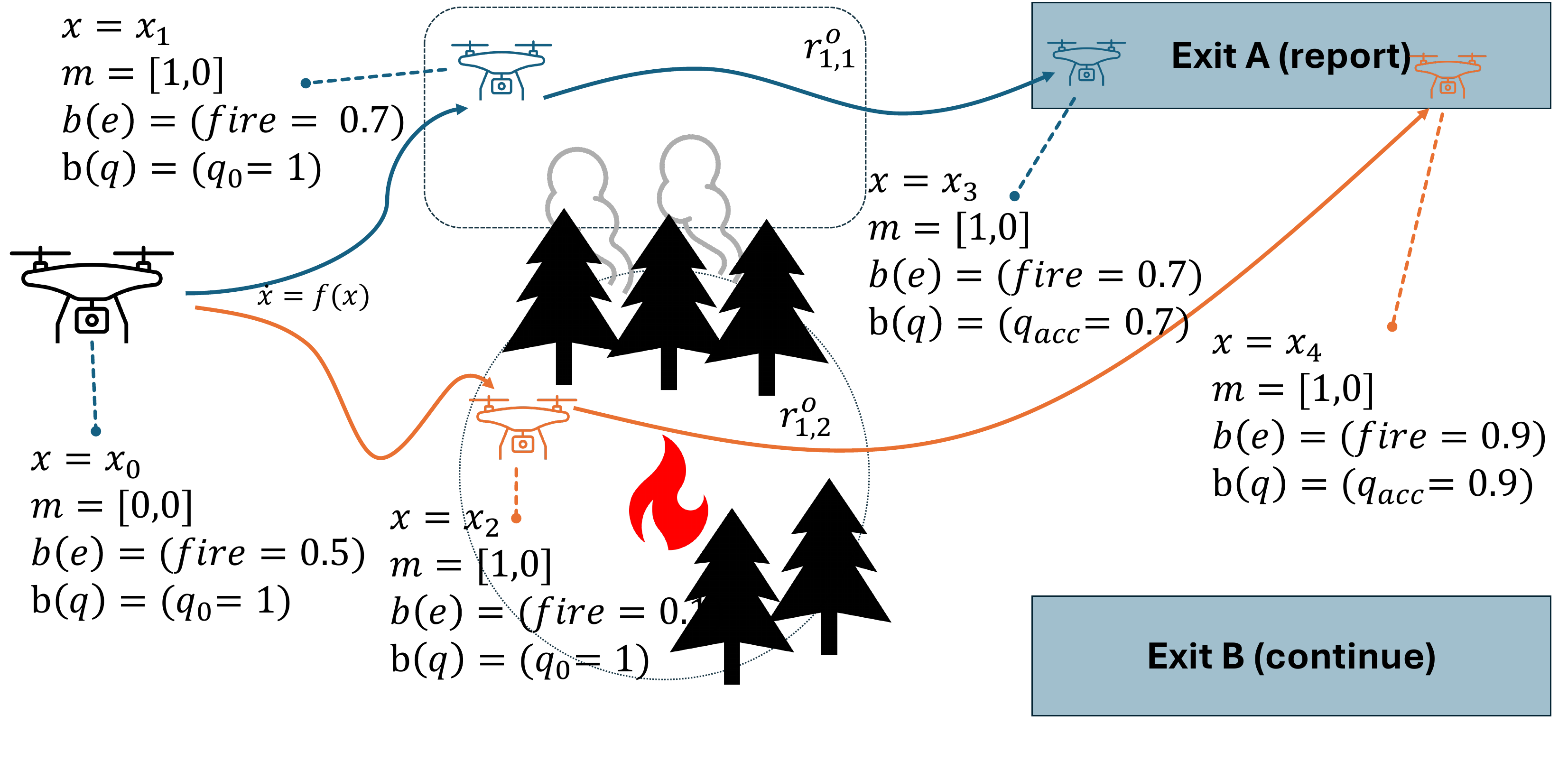}
    \caption{This example shows two possible motion trajectories in Example~\ref{ex: drone}, where the observation quality is better near the fire than above the trees. In both (blue and orange) trajectories, the drone evolves in continuous time until it reaches $r_{1,2}^o$ or $r_{1,1}^o$, which triggers a guard $G_{T}$. Let both observations be $o = fire$ Then, it updates its memory state $m$, and updates its belief $b(e)$ of the environment state. The drone state evolves again until it reaches exit A, in which $G_{R}$ is triggered. This causes a belief update to the task DFA belief $b(q)$. In this example, the orange trajectory has higher probability of success than blue.}
    \label{fig: drone shs}
\end{figure}

Sampling-based motion planning (SBMP) algorithms like RRT~\cite{kinoRRT} are effective for nonlinear dynamics and have been extended to $\ltlf$ specifications~\cite{Luo2021,Lahijanian:TRO:2016}. However, they typically assume full observability or unimodal belief distributions. Partially Observable Markov Decision Processes (POMDPs) provide the standard framework for reasoning under sensor noise and environmental uncertainty, but solving them for indefinite-horizon reachability is generally undecidable~\cite{MADANI2003Undecidability}, and most solvers require discrete state-action spaces or finite-state abstractions that decouple high-level reasoning from low-level control. Neither paradigm alone can handle Stochastic Hybrid Systems (SHS), where continuous time-space evolution and discrete semantic uncertainty are tightly coupled. The key insight of our approach is that bandit-guided policy selection can steer a sampling-based planner through the hybrid belief space efficiently, without requiring discretization or predefined low-level controllers, yielding an anytime algorithm with sound probability guarantees.

In this work, we study the problem of synthesizing motion policies for continuous robotic systems with partial observations in environments with uncertain semantic labels, subject to $\ltlf$  specifications. We model this as a Partially Observable Stochastic Hybrid System (PO-SHS) and propose an algorithm that combines the scalability of sampling-based motion planning with decision-theoretic principles from POMDP solvers to explore the continuous hybrid belief and policy space. Our algorithm operates in an anytime fashion, returning an iteratively improving policy that can be terminated at a user-specified performance threshold or time limit, and we prove it is sound and asymptotically optimal.

In summary, our contributions are fourfold:
(i) A PO-SHS modeling framework for $\ltlf$ task and motion planning under partially observable environment states, enabling policy synthesis with formal guarantees on task success probability.
(ii) A novel algorithm, Sampling-based Bandit-guided Policy Iteration (SaBPI), that uses a bandit-based approach to select promising policies and guide a sampling-based motion planner in the hybrid belief space.
(iii) A theoretical analysis establishing that SaBPI produces an anytime policy that is a sound lower bound on task success probability and is asymptotically optimal. To the best of our knowledge, this is the first such algorithm for hybrid POMDPs with deterministic continuous flow fields. (iv) Empirical evaluation on several case studies and benchmarks demonstrating that our approach outperforms several baselines.

\subsection{Related Work}

The problem of integrated task and motion planning (TAMP) under uncertainty spans formal methods, sampling-based planning, and decision-making under partial observability.

\subsubsection{Integrated Task and Motion Planning} Traditional TAMP frameworks decouple symbolic task planning, typically using PDDL~\cite{Kaelbling2010pddlplanning}, from geometric motion planning~\cite{Kingston2018samplingbased}, and generally assume deterministic transitions or full observability. SBMP algorithms have been extended to $\ltl$ specifications over continuous spaces~\cite{Luo2021, Lahijanian:TRO:2016}, and several works address uncertainty in continuous states or measurements~\cite{Luna:WAFR:2015, Oh2021, Ho2023simba} or uncertain maps~\cite{fu2016optimal}. However, none of these methods handle partial observability over discrete semantic states. Our setting addresses the key challenge where the task progress itself is a hidden variable governed by stochastic observations.

\subsubsection{Reactive Planning and MPC}
Receding-horizon and model predictive control methods have been applied to temporal logic constraints~\cite{Wongpiromsarn2012recedingltl} and co-safe LTL planning with active perception~\cite{kantaros2022perception}. While effective for local control and short-horizon replanning, these methods often rely on open-loop or locally-optimal trajectories. Consequently, they lack the global reasoning required to provide sound guarantees on task satisfaction probability for unbounded-horizon $\ltlf$ objectives over hybrid continuous dynamics, where information-gathering actions must be balanced with task progression.

\subsubsection{Decision Making under Partial Observability} 
POMDPs are the standard framework for planning under sensor noise and environmental uncertainty~\cite{shani2013survey}. Point-based solvers~\cite{Pineau2003pbvi} provide lower-bound guarantees but require discrete state and action spaces. Reachability and $\ltl$ synthesis on POMDPs has been studied for discrete spaces~\cite{Ho2024reachability, andriushchenko2023symbiotic, Wongpiromsarn2012pomdpltlcontrol} and via finite-state abstraction of dynamics~\cite{haesaert2018, haesaert2019}, but these abstractions decouple the discrete and continuous layers in ways that struggle with nonlinear hybrid constraints. 

Sampling-based methods~\cite{Sunberg2018pomcpow} handle continuous controls but lack formal reachability guarantees. Common belief-space planners~\cite{bry2011rrbt, Ho2022gbt} assume unimodal Gaussian distributions, which are insufficient for the multimodal beliefs inherent in semantic uncertainty. Recent work has addressed non-Gaussian beliefs for reach-avoid objectives~\cite{vahs2026safetycriticalcontrolpartialobservability} or $\ltlf$ in belief space~\cite{Ho2023simba}, but these typically focus on continuous state uncertainty. Semantic planners~\cite{sundarsingh2026safeplanningunknownenvironments} provide perceptual guarantees but decouple uncertainty estimation from planning. To our knowledge, no existing method provides end-to-end task satisfaction guarantees for $\ltlf$ tasks involving hybrid dynamics with discrete semantic hidden states.

\subsubsection{Connection to SaBRS} Our work adapts the Sampling-based Reactive Synthesis (SaBRS) algorithm~\cite{Ho2024nhs}, which handles nondeterministic hybrid systems in adversarial (worst-case) settings. SaBPI extends this to the probabilistic, partially observable setting, replacing adversarial reasoning with stochastic policy optimization.

%% file: sections/Problem.tex
In this work, we consider nonlinear robotic systems with complex temporal tasks under uncertainty. Specifically, we focus on uncertainty arising  partial knowledge about the environment and partial observability.

\subsection{Robot Dynamics and Task}
The robot's state $x \in X \subset \mathbb{R}^n$ evolves according to
\begin{equation}
    \label{eq:system}
    \dot{x} = f(x, u)
\end{equation}
where $u \in U \subset \mathbb{R}^m$ is the control input,
and $f: X \times U \to \mathbb{R}^n$ is the vector field. We assume  
$f$ is Lipschitz continuous in both arguments, i.e., there exists constants $L_x, L_u > 0$ such that  $\forall x_1, x_2 \in X$ and $\forall u_1, u_2 \in U$,
\begin{align*}
    \|f(x_1, u_1) - f(x_2, u_2) \| \leq L_x \|x_1 - x_2 \| + L_u \|u_1 - u_2\|.
\end{align*}

Let the initial state at time $t_0$ be $x_{t_0} = x_0$. Given a controller $u_t$, $ t \in [0,T]$, the robot's state $x_t$ at time $t$ is the solution to \eqref{eq:system}.

The state space $X$ consists of obstacles $X_{obs}$ (e.g., out-of-bound velocity) and a set of regions of interest $X_r = \{r_1, \ldots, r_{|X_r|}\}$, where $r_i \subset X$. Each region $r_i$ is associated with a set of properties, represented by atomic propositions.
Let $AP = \{p_1, \ldots, p_{|AP|}\}$ denote the set of all atomic propositions.
The labeling function $L: X_r \to 2^{AP}$ maps each region $r_i$ to the subset of properties it satisfies, i.e., $L(r_i) \subseteq AP$. 

The robot is tasked with satisfying temporally extended specifications over $AP$.
We use Linear Temporal Logic over finite traces ($\ltlf$) to express such tasks, as it provides a natural formal language for missions that can be completed in finite time.

\begin{definition}[$\ltlf$ Syntax \cite{ltlf}]
 Let $AP$ be a set of atomic propositions. Then, an $\ltlf$ formula over $AP$ is recursively defined by
    \begin{align*}
        \phi :=  p \mid \neg \phi \mid \phi \vee \phi \mid  \tempop{X} \phi \mid \phi  \until \phi
    \end{align*}
where 
$p \in AP$ is an atomic proposition, $\neg$ (``negation'') and $\vee$ (``or'') are Boolean operators, and  $\tempop{X}$ (``next'') and $\until$ (``until'') are the temporal operators.
\end{definition}

From this definition, one can derive other standard temporal operators, e.g., the $\eventually$ (``eventually") and $\globally$ (``globally") operators are given by $\eventually \phi \equiv \top \until \phi$ and $\globally \phi \equiv \neg \eventually \neg \phi$, where $\top \equiv p \lor \neg p$ is \emph{true}.

$\ltlf$ formulas are interpreted over finite words in $(2^{AP})^*$.  Due to space constraints, we refer the reader to work \cite{ltlf} for the semantics of $\ltlf$. The satisfaction of $\phi$ by a finite word $\sigma \in (2^{AP})^*$ is written as $\sigma \models \phi$.

Consider a trajectory $\mathbf{x}_t$ for $t \in [0,T]$, which visits the regions in $X_{r}$ at time intervals $[t_i, t_{i+1})$ for $i = 0,\ldots,N$ such that $t_{N+1} = T$. The trace (word) of this trajectory is defined as $L(\mathbf{x}) = L(\mathbf{x}(t_0))L(\mathbf{x}(t_1))\cdots L(\mathbf{x}(t_N))$. We say that a robot trajectory satisfies formula $\phi$, written $\mathbf{x} \models \phi$, iff $ L(\mathbf{x}) \models \phi$.

\begin{example}
    Consider the drone in Example~\ref{ex: drone}. The task is 
    \begin{align*}
        \phi = \globally (fire \rightarrow \eventually (A)) \wedge \globally (\neg fire \rightarrow \eventually (B)) \wedge \globally(\neg obs).
    \end{align*}
\end{example}

\paragraph*{$\ltlf$ to Deterministic Finite Automaton (DFA)}

An $\ltlf$ formula $\phi$ can be translated into a DFA  that represents precisely the traces that satisfies $\phi$ \cite{ltlf}. A DFA constructed from $\phi$ is defined as a tuple $\automaton = (Q, q_0, 2^{AP}, \delta, F)$, where $Q$ is a finite set of states, $q_0 \in Q$ is the initial state, $2^{AP}$ is a set of input symbols, $\delta : Q \times 2^{AP} \rightarrow Q$
is the transition function, $F \subseteq Q$ is the set of accepting states.

\subsection{Label and Observation Uncertainty}

As in real-world settings, we assume the robot has incomplete information and imperfect sensing.
In particular, the true atomic propositions associated with each region $r_i$ (i.e., the label $L(r_i)$) are unknown to the robot.
Instead, the robot has an initial belief of the labels and can obtain noisy observations of the label of $r_i$ through a set of sensing regions associated with it, denoted $R^o_i = \{r^{o}_{i,1}, \ldots, r^{o}_{i,|R^o_i|}\}$,  $r^{o}_{i,j} \subseteq X$, e.g., 
the robot may move to $r^{o}_{i,1}$ to capture an image of $r_i$ and then apply a classification algorithm to determine whether fire is present. 
Let $X^o = \bigcup _{i=1}^{|X_r|} R^o_i$ denote the set of all observation regions.

In continuous-time and continuous-space systems, a stochastic observation kernel leads to an infinite-dimensional belief space that is computationally prohibitive for formal policy synthesis. Instead, we treat observations as discrete, state-triggered events, allowing us to map the sensing process to a tractable discrete update. A single observation in our model acts as an abstraction of a continuous sensing process where multiple sensor measurements are abstracted into a single localized inference. This abstraction aligns with the output of modern perception modules that aggregate temporal data into semantic labels or classifications \cite{Kanso2025semanticslam}. Therefore, we assume a \emph{persistent} (or ``sticky") observation model: repeated observations within the same sensing region $r^o_{i,j}$ provide no additional information beyond the initial measurement\footnote{In many robotic applications, perception errors are driven by static environmental conditions (e.g., occlusions or fixed lighting). Because these errors are systematic and spatially correlated, repeated measurements (e.g., range or infrared sensors) from the same viewpoint yield redundant information.}. To this end, we define a memory vector $m \in \{0,1\}^{|X^o|}$, where the $k$-th element $m_k = 1$ if the $k$-th region in $X^o$ has been visited; otherwise, $m_k = 0$.

We model this observation process using a  Persistent Observation Hidden Markov Model (HMM). 

\begin{definition}[PO-HMM]
\label{def: hmm}
A Persistent Observation Hidden Markov Model (PO-HMM) is a tuple 
$\mathcal{M} = (X, E, , X^o, M, O, Z, T, b(e_0,m_0))$, where 
\begin{itemize}
    \item $X$ is the robot's state space,
    \item $E = 2^{AP} \times X_r$ is the set of all (hidden) environment states,
    \item $X^o$ is the set of all observation regions,
    \item $M = \{0,1\}^{|X^o|}$ is the set of all memory vectors associated with visiting the elements of $X^o$,
    \item $O = 2^{AP}$ is the set of observations (labels),
    \item $Z : X \times X^o \times E \times M \times O \rightarrow [0,1]$ is the observation function, such that 
    %
    given the current robot state $x \in X$ at
    the observation region $r^o_i \in X^o$ associated with $r_i$,
    the hidden state $e=(r_i, L(r_i))$ determined by the true label $L(r_i)$, and
    the memory vector $m \in M$, 
    the robot observes label $o \in O$ for region $r_i$ with probability $Z(o \mid x, r^o_i, e, m)$.
    If the $i$-th entry of $m$ is 1, i.e., $r^o_i$ is previously visited, the observation is uninformative (null observation)\footnote{This can also be modeled with $
    Z(o \mid x, r^o_i, e, m) = \frac{1}{|O|}$ if $m_i = 1$.},
    \item $T : M \times X \times  X^o \rightarrow M$ is the memory update function such that for 
    $m'=T(m,x,r^o_i)$,
    $m_i' = 1$ if $x \in r^o_i$, otherwise, $m'_i = m_i$,
    where $m_i$ denotes the $i$-th element of $m$.
    \item $b(e_0, m_0) \in \Delta(E) \times M$, is the robot's initial belief of the labels of the regions in $X_r$, where $\Delta(E)$ denotes the set of all probability distributions over the elements of $E$, and $m_0$ is the initial memory vector.
\end{itemize}
\end{definition}

Instead of knowing the exact environment states $e$, the agent maintains a \emph{belief state}, which is a probability distribution over all possible environment states and memory, conditioned on the history of observations. We denote the belief at time $t$ as $b^{e, m}_t \in \Delta(E) \times M$.

\subsection{Planning Problem}

We are interested in decision making (what regions to visit) and motion planning (what controls to apply) for this robot to achieve its task $\phi$. We capture both of these aspects using a motion policy.

\begin{definition}[Motion Policy]
    Let 
    $h_t = (b_0,\langle r^o_1, o_1\rangle, \langle r^o_2, o_2\rangle, \ldots, \langle r^o_n, o_n\rangle) \in H$ 
    be the historical sequence of visited observation regions and received observations up to time $t$. An \emph{observation feedback-motion policy} is the mapping $\pi: H \times X \rightarrow U$.
\end{definition}

Given a motion policy $\pi$, a probability measure over the trajectories of the robot is induced by PO-HMM \cite{Skovbekk:TAC:2025}.  Hence, the satisfaction of task $\phi$ under $\pi$ becomes probabilistic, denoted by $\mathbb{P}(\pi \models \phi)$.
Our objective is to synthesize a $\pi$ that maximizes the probability of satisfying $\phi$.

\begin{problem}[Motion Policy Planning]
    \label{prob: policy}
    Given the system dynamics in \eqref{eq:system} with 
    the PO-HMM observation model in Definition~\ref{def: hmm}, and an $\ltlf$ formula $\phi$, compute a motion policy $\pi^*$ that maximizes the probability of satisfying $\phi$, i.e.,
    \begin{align}
        \pi^* = \argmax_{\pi} \;\; \mathbb{P}(\pi \models \phi).
    \end{align}
\end{problem}

Problem~\ref{prob: policy} is challenging for several reasons.
First, uncertainty in the observation of atomic propositions requires computing an observation-history–based policy, i.e., solving a POMDP.
Second, generating controls for a nonlinear robotic system to visit the required regions and complete the task is a kinodynamic motion planning problem.
Even simplified versions of these problems, i.e., geometric motion planning or policy synthesis for a finite-horizon discrete POMDP, are PSPACE-complete.

Standard methods for either motion planning or POMDPs are ill-suited for this combined setting, due to continuous-time dynamics and partial observability.
Our approach mitigates this intractability by formulating the problem as a hybrid of motion planning and decision making, and leveraging elements of both to design a provably \emph{sound}, \emph{anytime} algorithm that efficiently computes high-quality policies.

%% file: sections/Reformulation.tex
In our approach, we first convert the problem into a partially observable stochastic hybrid system (PO-SHS), and then perform policy synthesis on it.

\subsection{Partially Observation Stochastic Hybrid System}

\paragraph*{PO-SHS Construction}

We reformulate the planning problem in Section~\ref{sec:problem} as policy synthesis in a Partially Observable Stochastic Hybrid System with discrete uncertainty. This is achieved by taking the cartesian product of the robot state and dynamics in \eqref{eq:system}, PO-HMM $\mathcal{M}$, and DFA $\mathcal{A}$ of the $\ltlf$ formula $\phi$. 

\begin{definition}[PO-SHS with Persistent Observations]
\label{def:POSHS_full}
Given System~\eqref{eq:system} with PO-HMM $\mathcal{M} = (X, E, , X^o, M, O, Z, T, b_0)$ observation model and DFA $\automaton = (Q, q_0, 2^{AP}, \delta, F)$,
a \emph{partially observable stochastic hybrid system with persistent observations} (PO-SHS) is defined as $\poshs = \mathcal{M} \times \automaton = (S, b^s_0, U, f, \delta, X_{AP}, Z, T, S_\text{acc}),$ where
\begin{itemize}
    \item $S = Q \times E \times M \times X$ is the hybrid state space with discrete modes $Q \times E \times M$ and continuous state space $X$;
    \item $b^s_0 \in \Delta( \{q_0\} \times E) \times M \times X$ is the initial belief;
    \item $U$ is the control space;
    \item $f = X \times U \rightarrow \mathbb{R}^n$ is the continuous dynamics \eqref{eq:system};
    \item $G_{R}: X \rightarrow \{\top, \bot\} $ is the set of guard regions associated with $X_r$ such that $G_{R}(x) = \top \iff x \in X_r$;
    \item $\delta : Q \times 2^{AP} \rightarrow Q$ is the deterministic DFA transition function.
    \item $G_{T}: X \rightarrow \{\top, \bot\}$ is the set of guard regions associated with $X^o$ such that $G_{T}(x) = \top \iff x \in X^o$;
    \item $T : M \times X \times  X^o \rightarrow M$ is the memory update function as defined in Def.~\ref{def: hmm};
    \item $Z : X \times X^o \times E \times M \times O \rightarrow [0,1]$ is the observation function as defined in Def.~\ref{def: hmm};
    \item $S_\text{acc} =  F \times E \times M \times X \subseteq S$ is the set of accepting hybrid states, i.e., those whose DFA component is accepting.
\end{itemize}
\end{definition}

This definition is a special case of the typical partially observable stochastic hybrid system, where the transition function between modes is deterministic, the reset map is identity, and there is no stochasticity in the flow function.

Note that the uncertainty due to partial observability is only over $Q \times E$ components of the hybrid state, i.e., hybrid state $s_t = (q_t,e_t,m_t,x_t) \in S$ at time $t$ is a random variable because $q_t \in Q$ and $e_t \in E$ are partially observable.  Hence, the belief of $s_t$ is denoted by $b^s_t \in B = \Delta(Q \times E) \times M \times X$.  We refer to $B$ as the belief space of $\poshs$. 

A policy in $\poshs$ can be represented via belief states, which are sufficient statistics for the history.

\begin{definition}[Belief-based Control Policy]
    A belief-based control policy for PO-SHS $\poshs$ is the mapping $\pi_{\poshs} : B \rightarrow U$.
\end{definition}

Under $\pi_{\poshs}$, the evolution of the system in $\poshs$ is as follows. At the continuous time $t$, the agent has a belief $b^s_t \in B = \Delta(Q \times E) \times M \times X$, which consists of deterministic robot state $x_t \in X$ and memory $m_t \in M$, and a distribution over DFA state $q_t \in Q$ and environment state $e_t \in E$. 
The continuous state $x_t$ of $s_t$ evolves according to \eqref{eq:system} under $\pi_{\poshs}$ until a guard $G_T$ or $G_R$ is triggered. Note that until a guard is triggered, $q_t$, $e_t$, and $m_t$ remains constant, and consequently the distribution over $(q_t,e_t)$ does not change.
Let $\tau$ denote the time instant at which the system first hits a guard, i.e., $G_{R}(x_{\tau}) = \top$ or $G_T(x_{\tau}) = \top$. Then, the discrete components of $s_\tau$ makes an instantaneous transition. If $G_R$ is triggered at $r^o$, the new distribution of $Q$ is updated according to $\delta$ as detailed below. If $G_{T}$ is triggered, then the memory state is updated, and the system receives an observation $o \in O$ according to $Z$. The agent then updates its belief through a discrete jump at this time instance $\tau$. 
Let $b^s_{\tau^-}$ and $b^s_{\tau^+}$ denote the beliefs before and after the instantaneous discrete jump at time $\tau$, respectively. The belief update is given by
\begin{multline}
        \label{eq:beliefupdate}
 b^s_{\tau^+}(s^+) \propto Z(o | x_{\tau^-}, r^o, e_{\tau^-}, m_{\tau^-})\cdot\\
     \sum_{s^- \in Q\times E\times \{m_{\tau^-}\} \times \{x_{\tau^-}\}}\!\!\!\!\!\!\!\!\!\!\!\!\mathbf{1}(q^+,q') \mathbf{1}(m^+, m') b^s_{\tau^-}(s^-).
\end{multline}
\noindent where $q' = \delta(q^-, L(x_{\tau^-}))$ and $m' = T(m^-, x_{\tau^-}, r^o)$, and $\mathbf{1}(y,y')$ is the indicator function such that $\mathbf{1}(y,y') = 1 \iff y = y'$, otherwise $0$. 

\begin{example}
    Fig.~\ref{fig: drone shs} shows example belief trajectories for the PO-SHS for Example~\ref{ex: drone}.
\end{example}

The full construction of $\poshs$ leads to a combinatorial explosion in the size of the hybrid state space, since the number of discrete modes in the state space is $|Q| \cdot 2^{|AP|} \cdot |X_r| \cdot |X^o|$. Instead of explicitly constructing $\poshs$, we implicitly construct it by growing a tree via its semantics starting at $b^s_0$. In the following, we discuss our tree-based search algorithm to find piecewise constant $\pi_\poshs$ for $\poshs$.
\begin{remark}
    While Problem \ref{prob: policy} can be equivalently formulated as a POMDP, it does not admit a finite-state representation. Standard POMDP solvers typically require a discount factor or a finite horizon to ensure convergence, or they rely on online search heuristics that lack formal soundness guarantees during deployment. By contrast, the PO-SHS structure defined here allows for a tailored offline algorithm that explicitly leverages the system's hybrid dynamics to provide rigorous guarantees on the task success probability.
\end{remark}

%% file: sections/Methodology.tex
\section{Anytime Sampling-based Bandit-guided Policy Iteration}
\label{sec:methodology}

In this section, we present our methodology to policy synthesis in $\poshs$, which solves Problem~\ref{prob: policy}. We propose an algorithm, called \emph{Sampling-based Bandit-guided Policy Synthesis} (\poalg). \poalg is a novel adaptation of the core principles of the Sampling-based Bandit-guided Reactive Synthesis (SaBRS) algorithm \cite{Ho2024nhs}. We shift its focus from a nondeterministic two-player game to a probabilistic, partially observable setting. The key conceptual change is our re-interpretation of the search tree and the value function. Instead of minimizing a worst-case cost against an adversarial player, \poalg maximizes the probability of satisfying $\phi$ by propagating and updating beliefs in $\poshs$.

\poalg iteratively grows a search tree in the hybrid belief space of $\poshs$ by taking advantage of its specific structure. The tree $\tree = (\mathcal{N},\mathcal{E})$ is rooted at the initial belief state $b^s_0$ and extended based on the semantics of $\poshs$. Each node $n \in \mathcal{N}$ of the tree is a tuple $n = \langle b^s, N, \mathbf{u} \rangle$, where $b^s$ is a hybrid belief, $N$ is the number of times that node $n$ has been visited, and $\mathbf{u} = \{(u_i, t^{prop}_i)_i|i = 1,...,k\}$ is the set of piecewise constant inputs previously selected at $n$.

We model this search tree as an \texttt{AND/OR} tree.  \texttt{OR} nodes correspond to belief states, where the robot chooses a control input $(u,t) \in n.\mathbf{u}$. \texttt{AND} nodes correspond to the stochastic outcomes of applying that control. Each outgoing edge $\mathrm{e} = ((n,(u,t)),n') \in \mathcal{E}$ is given a probability $w(\mathrm{e}) = \mathbb{P}(n' \mid n,(u,t))$ that follows the semantics of $\poshs$. A policy tree on the \texttt{AND/OR} structure corresponds to a piecewise-constant control policy in the hybrid belief space. The objective is to construct a policy tree that maximizes the probability of reaching the accepting set $S_{acc}$, thereby satisfying $\phi$. 

To construct this policy tree efficiently, we propose an algorithm that alternates between two main steps: \emph{policy subtree selection} and \emph{policy expansion}. \poalg uses a bandit-based action selection methodology to select a policy in the tree. Then, the policy expansion step uses motion planning sampling techniques to grow the selected subtree. The pseudocode for \poalg is shown in Alg.~\ref{alg:framework}. This alternation of selecting promising policy subtrees and expanding on them combines the exploration-exploitation properties of bandit-based techniques at the policy level with the effectiveness of sampling-based motion planners in exploring a metric search space, resulting in an efficient, sound, and asymptotically optimal algorithm (see Sec.~\ref{sec:analysis}). The algorithm is run in an anytime fashion and ends when a termination criterion is met, i.e., a planning time threshold is hit, or a policy with success probability $1$ (or one that meets a success probability threshold) is found.

\begin{remark}
    This approach is similar to
    common tree-search algorithms in planning under uncertainty (i.e., MDPs and POMDPs). The main difference lies in how the policy space is searched. Instead of adding a single node from the search tree every iteration, we use sampling-based motion planning techniques to expand a selected policy tree many times each iteration by taking advantage of the semantic structure of $\poshs$.
\end{remark}

\begin{algorithm}[t]
    \caption{\poalg Algorithm}
    \label{alg:framework}
    \SetKwInOut{Input}{Input}\SetKwInOut{Output}{Output}
    \Input{POSHS $\poshs$, Initial belief $b_0$, Expansion ratio $k$, Exploration constant $c$}
    \Output{Anytime policy $\pi^{*}$ from $b_0$}
    $n_0 \gets \langle b_0, 0, \emptyset \rangle$\\
    $\tree = (\mathcal{N} \gets \{n_0 \}, \mathcal{E} \gets \emptyset)$\\
    \While{termination criteria not met}
    {
        $\policytree_\poshs \gets \texttt{UCB-ST}(n_0, c)$\\
    \For{$j = 1 \rightarrow k$}
    {
        $\policytree_\poshs \gets \texttt{Explore}(\policytree_\poshs)$ 
    }
    $\tree \gets \tree \cup \policytree_\poshs$\\
    $\hat{\policytree}_{\poshs}^* \gets \texttt{UCB-ST}(n_0, 0)$
    }
    \Return $\hat{\pi}^*$
\end{algorithm}

\subsection{Policy Evaluation}

To conduct tree search on $\mathcal{T}$, we have to evaluate a policy $\pi_{\poshs}$. In $\poshs$, the satisfaction of $\phi$ reduces to reachability to the set $S_{acc}$, which are sink states. Thus, we define a value function that represents the probability of reaching $S_{acc}$. We first define a value function for states:
\begin{align}
    \label{eq:value function}
    V(s) = \begin{cases}
        1 & \text{ if } s \in S_{acc}\\
        0 & \text{ if } s \notin S_{acc}
    \end{cases}
\end{align}

Using the value function, we define value functions for belief nodes given a policy as:
\begin{align}
    \label{eq:value function nodes}
    V^\pi(n) = \mathbb{E}[V^\pi(n')]
\end{align}

\noindent where $n'$ is a successor state of $n$. Intuitively, the value of $n$ under policy $\pi$ is the probability of successor nodes that are in $S_{acc}$.

\subsection{Policy Subtree Selection}

The search tree is made up of a set of policies, in the form of \texttt{AND/OR} subtrees. During search, the set of policies in the search tree can become extremely large. Hence, in each iteration of planning, the algorithm selects a policy subtree $\policytree_\poshs$ of the full search tree for expansion. The key technique is to use bandit-based exploitation/exploration to bias our tree expansion towards the policies that are promising, i.e., have high probability of satisfying $\phi$. 

From \eqref{eq:value function nodes}, the \emph{$\Q$-value} is the value for choosing input $(u,t)$ at node $n$ and then following policy $\pi$ at subsequent nodes,
\begin{align}
    \Q^{\pi}(n,(u,t)) = \mathbb{E}[V^{\pi}(n')]
\end{align}

The $\Q$-value allows us to evaluate the relative value of each input $(u,t^{prop})$ at node $n$. The maximization of $\Q$-value at each node provides us with the current best policies at each node (exploitation):
\begin{align}
    \label{eq: maxQ}
    V^{{\pi}}(n) = \max_{(u,t)} Q^{{\pi}}(n, (u,t)).
\end{align}
However, we also want to choose other policies that may eventually be optimal (exploration). We use a variant of the algorithm \emph{UCB for Strategy Tree selection} proposed by \cite{Ho2024nhs}. This algorithm chooses a policy subtree by treating the control selection problem at each node as a separate multi-armed bandit problem to tackle this exploration-exploitation tradeoff. For completeness, the algorithm is shown in Alg.~\ref{alg:UCB-PT}.

Alg.~\ref{alg:UCB-PT} first initializes the chosen policy tree $\policytree$ with the root node $n_0$ (Line $4$ Alg.~\ref{alg:framework}). From $n_0$, it selects control inputs $(u,t)_{sel}$ in the \texttt{AND/OR} tree according to the UCB1 criterion (Line 3 in Alg.~\ref{alg:UCB-PT}) from \eqref{eq:UCB1}:
\begin{align}
    \label{eq:UCB1}
    \UCB(n, (u,t)) = \Q^{{\pi}}(n,(u,t)) + c\sqrt{ 2 \ln (n.N) / n.N_{(u,t)} },
\end{align}
\noindent where $\max_\pi \Q^{{\pi}}(n,(u,t))$ is the $\Q$-value of the best policy found from node $n$ and taking control $(u,t)$, and $n.N_{(u,t)}$ is the number of times the control input $(u,t)$ has been selected at node $n$. All children (chance) nodes of $(n,(u,t)_{sel})$ are added to $\policytree$, and control inputs for each children are again selected according to \eqref{eq:UCB1} (Lines 5-6). This process repeats until a policy subtree $\pi$ of $\tree$ is obtained, which is when all the leaf nodes of a subtree are reached. \texttt{UCB-ST} biases selecting more promising parts of the tree, while still enabling exploration of subtrees that may eventually be part of the optimal policy.

\begin{algorithm}[ht]
    \caption{\texttt{UCB-ST}($n, e$)}
    \label{alg:UCB-PT}
     $\pi \gets \{n\}$; $n.N = n.N + 1$\\
    \If{$n$ is not a leaf node}
    {
        $(u,t)_{sel} \gets \argmax_{(u, t) \in n.u} \UCB(n, (u,t))$ using \eqref{eq:UCB1} \\
        $\pi_{\poshs}(n) = (u,t)_{sel}$\\
        \For{$n' \in children(n, (u,t)_{sel})$}
        {
            $\policytree_\poshs \gets \policytree_\poshs \cup \texttt{UCB-ST}(n', e)$
        }
    }
    \Return $\pi_\poshs$
\end{algorithm}
\subsection{Policy Improvement}
\label{sec:tree extension}

A policy subtree $\policytree_\poshs$ is extended in a sampling-based motion planning manner in the hybrid belief space. Any tree-based kinodynamic sampling-based technique (e.g., RRT or EST~\cite{Kingston2018samplingbased}) can be used in this step. The pseudocode for this step is shown in Alg.~\ref{alg:motionplanner}. In each iteration of exploration, a node $n$ in $\policytree$ that has a value less than one is sampled (nodes which have a value equal to one already have optimal policy subtrees). Likewise, nodes that have all its probability mass that violate $\phi$ have $0$ probability of eventually completing the task and thus are not sampled.

Let the sampled node $n$ have a belief state $b^s= (b^q, b^e, m, x)$. Then, a control $u \in U$ and time duration $t^{prop}$ are randomly sampled, and the node's continuous state $x$ is propagated by its dynamics. During propagation at continuous state $x'$, if guard $G_{R}$ is triggered, or guard $G_{T}$ is triggered for the first time ($x \in r_i \in X^o \wedge m_i = 0$), propagation is terminated. For a guard $G_{R}$, the belief $b^s$ is updated according to the DFA transition function $\delta$. For a guard $G_{T}$ associated with sensing regions, the memory is updated according to $T$, and a new belief is created for every $o \in O$ with non-zero probability according to $\mathbb{E}_{e\sim b^e}[Z(o|x, x^o, e, m)]$. Then, the belief of the state (i.e., belief of $q$ and $e$) is updated according to the Bayes rule \eqref{eq:beliefupdate}. If no guard is triggered and the state is propagated for the full duration, one new node is created and $q, e, m$ stays constant. Then, the control and propagated duration $(u,t_{act})$ 
is added to the set of controls $n.\mathbf{u}$, the probability of reaching that node is added to the edge, and the edge and leaf node are added to the tree. Finally, the value of nodes in the policy is updated by backpropagation to the root using \eqref{eq:value function nodes}. 

This expansion step is repeated $k$ times for each policy subtree selection iteration to ensure that sufficient expansion of a policy subtree is performed at each selection iteration. 

\begin{algorithm}[t]
    \SetKwInOut{Input}{Input}
    \SetKwInOut{Output}{Output}
    \caption{\texttt{Explore}($\policytree$)}
    \label{alg:motionplanner}
    $n_{sel} \gets$ \texttt{SampleAndSelect}($\policytree$)\\
    $u_{rand} \gets \texttt{SampleControl}(U$)\\$t_{rand} \gets \texttt{SampleDuration}((0, T_{prop}]$)\\
    $n_{select}(\mathbf{u}) \gets n_{select}(\mathbf{u}) \cup \{(u_{rand}, t_{rand})\}$\\
    $\{(n_{i}, \mathbb{P}(n_{i}| \cdot))_i|i,...,m\}, t_{act} \gets$ \texttt{Prop}($n_{sel}, u_{rand}, t_{rand}$)\\
    \For{each $n_{i}$}
    {   
    $\policytree(n) \gets \policytree(n) \cup n_{i}$\\
    $\policytree(e) \gets \policytree(e) \cup ((n, (u_{rand},t_{act})), n_{i})$\\
    Label $w(e)$ with $\mathbb{P}(n_{new}| \cdot)$\\
    Update node values from $n_{new}$ to $n_0$ by backpropagation to root via \eqref{eq:value function nodes}
    }
    \Return $\policytree$
\end{algorithm}

%% file: sections/Analysis.tex
\section{Analysis}
\label{sec:analysis}
In this section, we show that the solution returned by our anytime algorithm is a \emph{sound lower bound} on the optimal solution. Additionally, our algorithm asymptotically converges to the optimal probability of satisfying $\phi$. To be concrete, we consider the case that kinodynamic RRT \cite{kinoRRT} is used as the strategy expansion technique. First, we state our first main result, which is that our algorithm is \emph{sound}.

\begin{lemma}[Sound Policy]
    At iteration $i$, \poalg produces a sound lower bound policy $\pi_i$, i.e., the reported value $V^{\pi_i} \leq V^{\pi^*}$, and executing $\pi_i$ results in at least that expected return.
\end{lemma}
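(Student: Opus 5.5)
The plan is to show that the value the tree reports for the greedy policy $\hat{\policytree}^*_\poshs$, extracted by \texttt{UCB-ST}$(n_0,0)$, equals the true success probability of a well-defined executable policy $\pi_i$ for $\poshs$. The inequality $V^{\pi_i} \le V^{\pi^*}$ then follows at once, since $\pi^*$ maximizes $\mathbb{P}(\pi\models\phi)$ over all policies.

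\textbf{Step 1 (edges are exact).} Every edge of $\tree$ is built by \texttt{Prop} using the exact semantics of $\poshs$. The flow \eqref{eq:system} is deterministic, and $f$ is Lipschitz, so solutions are unique. The jumps at guards $G_R$ and $G_T$ are computed exactly: the DFA update uses $\delta$, the memory update uses $T$, the observation probabilities are $\mathbb{E}_{e\sim b^e}[Z(o\mid\cdot)]$, and the belief update is the Bayes rule \eqref{eq:beliefupdate}. Consequently, for any \texttt{OR} node $n$ and any stored input $(u,t)\in n.\mathbf{u}$:
\begin{itemize}
\item the children of $n$ under $(u,t)$ are exactly the reachable beliefs with nonzero probability;
\item the edge weights $w(\mathrm{e})=\mathbb{P}(n'\mid n,(u,t))$ sum to one;
\item each child belief is the true posterior.
\end{itemize}

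\textbf{Step 2 (executable policy).} Define $\pi_i$ from the tree. At each node, $\pi_i$ applies the selected $(u,t)$, which is piecewise constant in time. At each resulting belief (identified by the observation received at the guard), $\pi_i$ continues with the selection at the corresponding child. At a leaf that is not accepting, $\pi_i$ applies an arbitrary control.

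Since $V(s)\ge 0$ for every state, the leaf's contribution of $V(n)=b^q(F)$ (zero for nonaccepting leaves) is a lower bound on whatever success the arbitrary continuation achieves. Accepting states are sinks in the DFA, and the mass already in $F$ stays in $F$. So the leaf value is a valid lower bound on the leaf's true success probability under any continuation.

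\textbf{Step 3 (backpropagated value is that lower bound).} Proceed by induction on tree height, from the leaves up, using \eqref{eq:value function nodes} and \eqref{eq: maxQ}. The induction claim is that the backpropagated value $V(n)$ equals $\sum_{n'} w\,V(n')$ along the chosen input. This is precisely the probability, under $\pi_i$ from belief $n.b^s$, of reaching $S_{acc}$ within the tree, and it is at most the true success probability of $\pi_i$.

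This identifies $V^{\pi_i}(n_0)$, the reported value, with the probability that $\pi_i$ satisfies $\phi$ within its finite tree. Executing $\pi_i$ therefore achieves at least the reported value. Combining this with optimality of $\pi^*$ gives $V^{\pi_i}(n_0)\le \mathbb{P}(\pi_i\models\phi)\le \mathbb{P}(\pi^*\models\phi)=V^{\pi^*}$.

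\textbf{Main obstacle.} The delicate step is the correspondence in Step 2 between tree nodes and real executions. The concern is that when $\pi_i$ is executed, the actual trajectory triggers exactly the same guard sequence, with the same truncated duration $t_{act}$, as recorded by \texttt{Prop}. This requires two things:
\begin{itemize}
\item propagation must be exact, or sound under numerical integration, and I would assume it is exact;
\item the guard-triggered termination must be interpreted consistently, namely first entry into $X_r$ and first entry into unvisited $X^o$.
\end{itemize}
I would state exact propagation as a standing assumption. Under that assumption, determinism of the flow and uniqueness of solutions make the correspondence immediate. The remaining subtlety is that the persistent observation model makes revisits uninformative, so no unmodeled branching arises.
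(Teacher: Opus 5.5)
Your proposal is correct and follows essentially the paper's argument: tree edges carry the exact PO-SHS semantics, leaf values are the accepting mass, and the backups \eqref{eq: maxQ} are exact Bellman backups, so the reported value is achieved (at least) by an executable policy and therefore cannot exceed $V^{\pi^*}$. The difference is organizational: the paper inducts over node additions across iterations, while you induct over tree height at a fixed iteration, implicitly using that incremental backpropagation keeps each stored value equal to the backup of its children, and conclude via achievability plus optimality of $\pi^*$. Your treatment of leaf values as lower bounds under arbitrary continuation, and of exact propagation as an explicit assumption, is, if anything, more careful than the paper's claim that the backed-up value is the exact return.
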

\begin{proof}
    The partially observable stochastic hybrid system in Definition~\ref{def:POSHS_full} is constructed exactly via formulation in Problem~\ref{sec:problem}. In the initial instantiation $b_0$, we have that $V^{\pi_0}(b_0) = \mathbb{P}_{s\in b_0}[s \in S_{acc}] \leq V^*(b_0)$, and $V^{\pi_0}(b_0)$ is the exact expected return under $\pi_0$. Suppose after $i$ iterations of node additions, the algorithm maintains $V^{\pi_i} \leq V^*(b)$. In iteration $i+1$, a new leaf node $n_{leaf}$ is added. The leaf node has value $V(n_{leaf}) = \mathbb{P}_{s \in b_{leaf}}[s \in S_{acc}]$. Consider the backpropagation at a node $n$ from leaf node $n_{leaf}$ with \eqref{eq: maxQ} to the root node $n_0$. This is an exact Bellman backup, and we have that $V^{\pi_{i+1}}(n) \leq V^*(n) \forall n \in \mathcal{T}$, and $V^{\pi_{i+1}}(n)$ is the exact expected return. By induction, every $V^{\pi_i}$ computed by \poalg is sound.
\end{proof}

Next, we show that our policy selection methodology repeatedly selects every policy.

\begin{proposition}(Lemma 1 in \cite{Ho2024nhs})
    \label{lemma:UCB-ST}
    Given a search tree $\tree$ that grows over iterations and an exploration constant $c > 0$, 
    \texttt{UCB-ST} in Alg.~\ref{alg:UCB-PT} selects every policy subtree that is introduced at some finite iteration infinitely often. 
    That is, as the number of iterations approaches infinity, the number of times each such subtree of $\tree$ is selected also approaches infinity.
\end{proposition}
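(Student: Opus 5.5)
The plan is to argue by induction on the depth of nodes in the \texttt{AND/OR} tree, treating the control choice at each \texttt{OR} node as a separate UCB1 bandit whose arm set grows over time. The basic fact is this. Consider any node $n$ that is visited infinitely often by \texttt{UCB-ST}, and any control $(u,t)\in n.\mathbf{u}$ that has been added by some finite iteration. Then $(u,t)$ is selected at $n$ infinitely often.

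To show this, suppose instead that $(u,t)$ is selected only finitely often at $n$. Then $n.N_{(u,t)}$ is eventually a constant $K$, while $n.N\to\infty$. The exploration term $c\sqrt{2\ln(n.N)/K}$ therefore diverges, since $c>0$. Every $\Q$-value is a success probability and lies in $[0,1]$ by the soundness lemma and the definition \eqref{eq:value function nodes}. Hence $\UCB(n,(u,t))\to\infty$.

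On the other hand, at each time some control is chosen at $n$. Fix a finite iteration and consider the controls present at that point. We need that every arm selected at $n$ infinitely often has a UCB value that stays bounded along the times it is selected. The first thing to try is the standard UCB1 pigeonhole argument. At a step where arm $a$ is chosen, its UCB value is the maximum over all arms, including $(u,t)$. Now $\ln(n.N)$ grows only logarithmically, while an arm chosen infinitely often has $n.N_a\to\infty$. So after finitely many further selections, every other arm's bonus would drop below that of $(u,t)$, contradicting maximality.

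The main obstacle is that the arm set $n.\mathbf{u}$ grows unboundedly, because \texttt{Explore} keeps adding controls. We cannot simply say ``finitely many arms share infinitely many pulls.'' I would handle this with a counting argument. Let $\tau_j$ be the $j$-th time $n$ is visited after $N_{(u,t)}$ freezes at $K$. At time $\tau_j$ the selected arm $a_j$ satisfies
\[
1+c\sqrt{2\ln(\tau_j)/n.N_{a_j}} \;\ge\; c\sqrt{2\ln(\tau_j)/K},
\]
so $n.N_{a_j}\le K'\!\cdot K$ for a constant $K'$ depending only on $c$ once $\ln\tau_j$ is large. Thus every selection after some point goes to an arm with at most $O(K)$ pulls. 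This is consistent only if new arms keep appearing, and it still forces $N_{(u,t)}$ to be compared against arms of bounded count.

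If this argument falls short, I would instead rely on the stated citation. The proposition is Lemma~1 of \cite{Ho2024nhs}, and its proof transfers verbatim. The only properties used are that values are bounded (here in $[0,1]$ rather than costs), that new arms at a node are created by \texttt{Explore} on selected subtrees, and that \texttt{UCB-ST} increments $n.N$ on every visit. I would therefore check those three hypotheses and invoke that lemma for the per-node statement.

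Finally, I would lift the per-node statement to subtrees by induction on depth. The root is visited every iteration. If a node is visited infinitely often, each fixed control there is chosen infinitely often. Every child of $(n,(u,t))$ is added to $\policytree$ whenever that control is chosen, so each child is visited infinitely often. A policy subtree introduced at a finite iteration is a finite object: finitely many nodes, each with one prescribed control. So its prescribed controls are chosen infinitely often at each of its nodes.

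The remaining subtlety is that these choices must occur simultaneously in the same iteration for the whole subtree to be selected. I would handle this by noting that the choices at distinct nodes are not independent, since UCB is deterministic. The induction then applies to the event ``subtree prefix selected'': condition on the infinitely many iterations in which the prefix down to $n$ is selected, and apply the bandit argument to $n$ restricted to those visits. This works because $n.N$ counts exactly those visits.
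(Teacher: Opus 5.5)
The paper gives no argument of its own here: it imports Lemma~1 of \cite{Ho2024nhs} as stated, including the subtree-level conclusion. Your fallback of citing that lemma is therefore the paper's route. However, you cite it only for the per-node claim and then supply the lift to subtrees yourself, and neither of your self-contained steps closes. Your finite-arm pigeonhole and the path-wise induction (every node on a fixed root-to-node path is visited infinitely often) are fine.

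\emph{Per-node step.} Your bound says that once $n.N_{(u,t)}$ freezes at $K$, every later selection at $n$ goes to a control with roughly at most $K$ prior selections. Each control can therefore absorb only boundedly many of the remaining visits, which gives $n.N - N_0 \le (K'K+1)\,|n.\mathbf{u}|$ for some finite $N_0$. This is a contradiction only if $|n.\mathbf{u}| = o(n.N)$, i.e., only if controls are created at $n$ sublinearly in its visit count. \texttt{Explore} guarantees nothing of the kind. Up to $k$ controls may be added to $n$ in every iteration in which $n$ lies in the selected subtree, so only $|n.\mathbf{u}| = O(k\, n.N)$ holds. Moreover, a control never yet selected has $n.N_{(u,t)}=0$, so its bonus in \eqref{eq:UCB1} is unbounded, and a steady supply of new controls can starve $(u,t)$ indefinitely. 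The missing ingredient is a growth restriction on $n.\mathbf{u}$ (progressive-widening style) or a convention for new arms. The three hypotheses you propose to check before transferring the cited proof do not address this.

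\emph{Lift to subtrees.} Sibling outcome nodes $n_1,n_2$ of the same $(n,(u,t))$ are visited in exactly the same iterations but run separate deterministic bandits. Their counters $n_2.N$ and $n_2.N_a$ are incremented on every visit to $n_2$, not only in those iterations where the branch below $n_1$ makes its prescribed choice. So ``$n.N$ counts exactly those visits'' is true only when the conditioning event is the root-to-$n$ path. Once the selected prefix contains an off-path branch, you cannot restrict the UCB1 argument to that subsequence. Hence ``every control is chosen infinitely often at every node'' does not imply ``every combination is chosen in the same iteration infinitely often.'' For instance, suppose $n_1$ and $n_2$ each have two controls $a,b$ with equal $\Q$-values (e.g., all zero, which is typical before any accepting leaf is found) and selection counts offset by one. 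Then \eqref{eq:UCB1} with a fixed tie-break picks $a,b,a,b,\dots$ at $n_1$ and $b,a,b,a,\dots$ at $n_2$. The subtree choosing $a$ at both is never selected, although each control is chosen infinitely often at each node. Establishing the proposition therefore needs a separate argument about joint selection across sibling branches, which the per-node UCB1 property does not supply.
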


We assume that there exists an optimal policy with a finite branching of finite length trajectories. Further, similar to \cite{Ho2024nhs}, we assume that an optimal solution with a non-zero radius of clearance exists, as defined below.

\begin{definition}[Clearance]
    For policy $\pi$, let $Traj^\pi$ be the trajectories of $\poshs$ induced under $\pi$ with non-zero probability and $B^\pi = Q^\pi \times \Delta(E)^\pi \times M^\pi \times X^\pi \subset B$ be the set of hybrid beliefs visited by $Traj^\pi$. Further, denote by $\mathbb{B}_{\delta}(x)$ the ball centered at point $x$ with radius $\delta \geq 0$.  
    Clearance $\delta_{clear}^\pi$ of $\pi$ is the supremum of radius $\delta$
    such that for every $b^{s}(s = (q,e,m,x)) \in B^\pi$ and $\forall x' \in \mathbb{B}_{\delta}(x) \subset X$, it holds that $G_{R}(x') = G_{R}(x)$ and $G_{T}(x') = G_{T}(x)$.
\end{definition}

We now formally state the second main result of our analysis, which is that \poalg (Alg.~\ref{alg:framework}) asymptotically approaches the optimal policy.

\begin{theorem}[Asymptotically Optimal]
    As the number of \poalg iterations $i \rightarrow \infty$, $V^{\pi_i} \rightarrow V^*$, i.e., algorithm is asymptotically optimal.
\end{theorem}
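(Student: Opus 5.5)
The argument combines Proposition~\ref{lemma:UCB-ST} with the probabilistic completeness of kinodynamic RRT, applied branch by branch along a finite optimal policy tree. By the standing assumption, fix an optimal policy $\pi^*$ with finite branching, finitely many finite-length trajectories, and clearance $\delta^{\pi^*}_{clear} > 0$. Since every mode switch of $\poshs$ is triggered by a guard, $\pi^*$ is described by a finite \texttt{AND/OR} tree $\mathcal{T}^*$. Its \texttt{OR} nodes are the beliefs at which a guard is hit, or the initial belief. Each \texttt{OR} node is joined to its \texttt{AND} children by a guard-free continuous segment followed by a discrete jump. Let $\ell_1,\dots,\ell_K$ be the leaves of $\mathcal{T}^*$, with path probabilities $p_j$, so that $V^* = \sum_{j:\ell_j \in S_{acc}} p_j$.

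First, fix $\epsilon>0$ and show that, with probability one, for some finite iteration $i_\epsilon$ the search tree $\tree$ contains a policy subtree $\hat\pi$ that is combinatorially isomorphic to $\mathcal{T}^*$. Isomorphism here means the same sequence of guards hit, with the same DFA transitions, memory updates and observation branches. Each continuous segment of $\hat\pi$ must remain within $\delta^{\pi^*}_{clear}$ of the corresponding segment of $\pi^*$. Lipschitz continuity of $f$ together with Gr\"onwall's inequality gives the key fact: a piecewise-constant control with small $\|u-u^*\|$ and small time error, started within a small ball of $x^*$, keeps the trajectory inside the clearance tube. Such a trajectory therefore triggers exactly the same guards, at the same regions, as the optimal one. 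This is the standard kinodynamic RRT completeness argument, cf.~\cite{kinoRRT}.

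Because observations, memory updates and DFA transitions depend only on which region is hit, the discrete part of the belief update~\eqref{eq:beliefupdate} agrees exactly. The consequences are that (i) the edge probabilities $w(\mathrm{e})$ of $\hat\pi$ equal those of $\pi^*$, and (ii) the leaves of $\hat\pi$ lie in $S_{acc}$ exactly when the corresponding leaves of $\mathcal{T}^*$ do. Hence $V^{\hat\pi}=V^*$, or at least $V^{\hat\pi}\ge V^*-\epsilon$ if clearance is only used approximately.

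Next, show that this subtree is actually grown. Proceed by induction over the depth of $\mathcal{T}^*$. Suppose a prefix of $\hat\pi$ has been built at a finite iteration. By Proposition~\ref{lemma:UCB-ST}, any policy subtree containing this prefix is selected infinitely often. Each selection performs $k$ calls to \texttt{Explore}, and each call samples a node of $\policytree$ with positive probability. It then draws $(u,t)$ from a set of positive measure that propagates into the clearance tube of the next segment. By Borel--Cantelli, the next segment, together with all of its observation branches (all created at once by \texttt{Prop}), is added almost surely at a finite time. Iterating over the finitely many segments gives $i_\epsilon<\infty$ almost surely. Finally, the reported policy is $\texttt{UCB-ST}(n_0,0)$, which is the exact Bellman maximization~\eqref{eq: maxQ}. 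Combined with soundness (the preceding Lemma), this gives $V^*-\epsilon \le V^{\hat\pi} \le V^{\pi_i}\le V^*$ for all $i\ge i_\epsilon$, and monotonicity in $i$ yields $V^{\pi_i}\to V^*$.

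The main obstacle is the induction step. The obstruction is that a node on $\hat\pi$ is sampled only when the selected subtree contains it, and UCB may favour competing subtrees. I would handle this by appealing to Proposition~\ref{lemma:UCB-ST}, which guarantees infinitely many selections. To turn this into a positive per-selection success probability that is uniformly bounded below, I would require the node-sampling rule in \texttt{SampleAndSelect} to put positive mass on every node with value $<1$. The lower bound then comes from the fact that the tube measure depends only on the clearance and the Lipschitz constants.
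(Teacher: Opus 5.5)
Your proof uses the same two ingredients as the paper's. Proposition~\ref{lemma:UCB-ST} guarantees infinitely many selections of the relevant subtree, and RRT-style tracking under clearance and Lipschitz continuity guarantees a path that follows $\pi^*$. What differs is the bookkeeping of the induction. The paper inducts on the number $t$ of root-to-leaf trajectories $p_l\in Traj^{\pi^*}$ that the selected subtree follows, and it obtains each new one in a single step by invoking \cite[Theorem 2]{rrtpc} for a whole path from $b_0$ to $b_{acc}$. You induct instead over the depth of the optimal \texttt{AND/OR} tree, growing one segment at a time together with all of its observation siblings. This makes explicit three things the paper leaves implicit:
\begin{itemize}
\item the followed paths share control choices at common \texttt{OR} nodes, so they form a single policy;
\item a combinatorially matching subtree has matching edge weights and accepting leaves, so its value is $V^*$ (up to $\epsilon$);
\item the reported policy \texttt{UCB-ST}$(n_0,0)$ dominates that subtree by exact backup, which with the soundness lemma gives $V^*-\epsilon\le V^{\pi_i}\le V^*$.
\end{itemize}
The price is that you re-derive the per-extension success probability yourself instead of citing it.

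That step does not go through as you wrote it. Borel--Cantelli needs the success probabilities at the selection times to be bounded below, or at least to have a divergent sum. Requiring \texttt{SampleAndSelect} to put ``positive mass on every node with value $<1$'' gives only positivity. The size-independent bound you get from the tube measure covers the $(u,t)$ draw, not the node draw. Under uniform node sampling, for example, the chance of picking the one frontier node you need shrinks as the selected subtree grows. UCB may also revisit a non-greedy subtree only logarithmically often, so the sum of success probabilities need not diverge. The standard repair, which the RRT analysis in \cite{rrtpc} relies on, is a nearest-neighbour ball argument. Any node in a small neighbourhood of the waypoint will serve, and a random sample in a smaller concentric ball selects such a node with probability bounded below by a volume ratio independent of tree size. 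Alternatively, assume such a size-independent lower bound for \texttt{SampleAndSelect} outright.

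Two smaller fixes are also needed. First, one \texttt{Explore} call adds a single constant control of duration at most $T_{prop}$. Your guard-to-guard ``segments'' must therefore be refined into such pieces, with intermediate non-guard \texttt{OR} nodes. Second, $Z(o\mid x,r^o,e,m)$ takes $x$ as an argument. Exact equality of edge weights therefore requires $Z$ to be constant on each sensing region. Otherwise you need continuity of $Z$ in $x$ (the paper's Lipschitz-jump assumption) and the $\epsilon$-version of your claim, and that is the real reason for the $\epsilon$, not approximate use of clearance.
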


\begin{proof}
    The proof follows a similar argument as probabilistic completeness in \cite{Ho2024nhs}. Consider an optimal policy $\policytree^*$ with clearance $\delta_{clear} > 0$. A trajectory $p_i \in Traj^{\policytree^*}$ can be described by the mapping $p_i : [0, t^{p_i}] \rightarrow B$. Let the sequence of beliefs from the trajectory $p_i$ be $b_0,b_1,\dots, b_{acc}$, where two beliefs $b_i, b_j$ have any interval duration $\tau > 0$ such that a guard region is not hit between $b_i$ and $b_j$, and cover $p_i$ with a set of balls of radius $\delta$ centered at $b_0, b_1, \cdots, b_{acc}$, such that for every $b^s (q,e,m,x) \in p_i$ and $\forall x' \in \mathbb{B}_{\delta}(x) \subset X$, it holds that $G_{R}(x') = G_{R}(x)$ and $G_{T}(x') = G_{T}(x)$. We say that a path $p_j$ \emph{follows} another path $p_i$ if each vertex of $p_j$ is within this constrained $\delta$ radius ball of $p_i$. Since the flow functions $F$ and jump functions $J$ are Lipschitz continuous, from \cite[Theorem 2]{rrtpc}, we are guaranteed that RRT asymptotically almost surely finds a control trajectory from $b_0$ to $b_{acc}$ that follows $p_i$ when starting from a tree which contains $b_0$. From Lemma~\ref{lemma:UCB-ST}, \texttt{UCB-ST} will always eventually select any strategy subtree $\policytree_i$ of our search tree $\tree$. Let $t$ be the number of paths in $\policytree$ that uniquely follows a path $p_i \in P$. Assume that at step $j$, the selected subtree $\policytree$ contains $0 < t <|P|$ paths. As expansion iterations increase, $\policytree$ asymptotically almost surely finds a path from $s_0$ to $b_{acc}$ that follows a new path $p_l \in P$ which it did not uniquely follow before. Hence, the new $\policytree^+_{i}$ expanded from $\policytree_{i}$ now contains $t + 1$ paths that unique follows paths in $P$. From Lemma~\ref{lemma:UCB-ST}, $\policytree^+_i$ will eventually be selected again. By induction, $t \rightarrow |P|$ and an optimal policy is found.
\end{proof}


%% file: sections/Evaluation.tex
We evaluate our proposed methodology in this section. 

\textbf{Setup: } We evaluate the performance of \poalg against $3$ algorithms, namely kinodynamic RRT, Monte Carlo Tree Search with progressive widening (MCTS-PW), and the nondeterministic hybrid system planner (\nhsalg) from \cite{Ho2024nhs}, in a series of case studies. For \nhsalg, we aim to evaluate how an algorithm designed for nondeterminism and not stochastic uncertainty would perform. Thus, we output an winning ($\mathbb{P}(\pi \models \phi)$) policy if one is found, and the best heuristic cost solution as described in \cite{Ho2024nhs} otherwise. 


We note that the comparison is limited to offline sampling-based methods without steering functions, excluding hierarchical TAMP (requirement of computationally expensive steering functions), MPC (local, not globally optimal), and online tree search (ineffective for long-horizon belief-space problems without advanced heuristics).

We implemented all algorithms in \texttt{C++} using OMPL \cite{sucan2012the-open-motion-planning-library}. All computations were performed single-threaded on a 3.60 GHz CPU with 32 GB RAM. For \poalg and \nhsalg, we used $k = 1000$ and $e = 0.05$ for all planning instances. All solvers are evaluated in an offline fashion.

\begin{figure*}[ht!]
    \centering 
    \begin{subfigure}{0.23\textwidth}
        \centering
        \includegraphics[width=\linewidth, height=0.14\textheight]{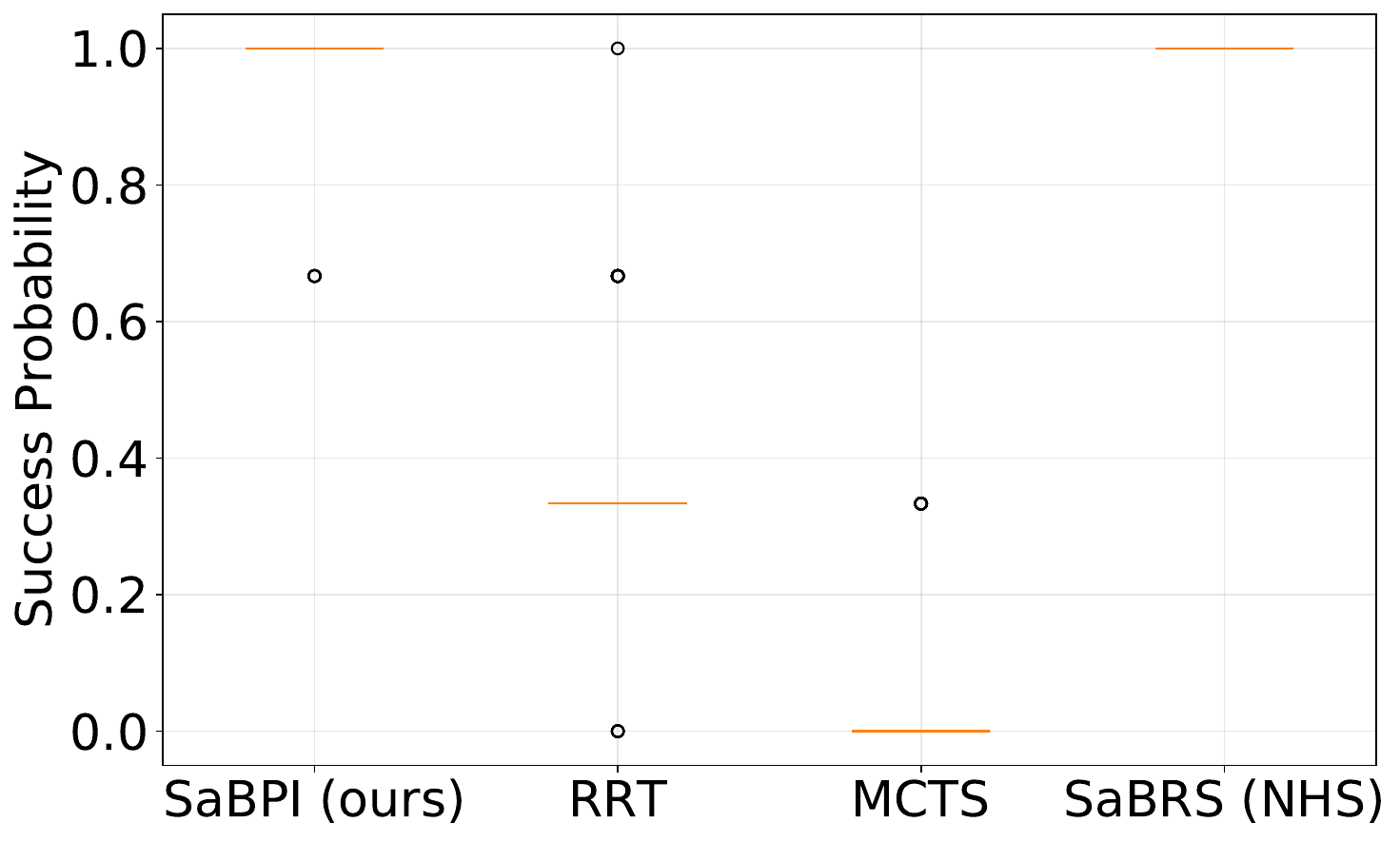}
        \caption{Door-Key}
        \label{fig:door-key}
    \end{subfigure}
    \begin{subfigure}{0.23\textwidth}
        \centering
        \includegraphics[width=\linewidth, height=0.14\textheight]{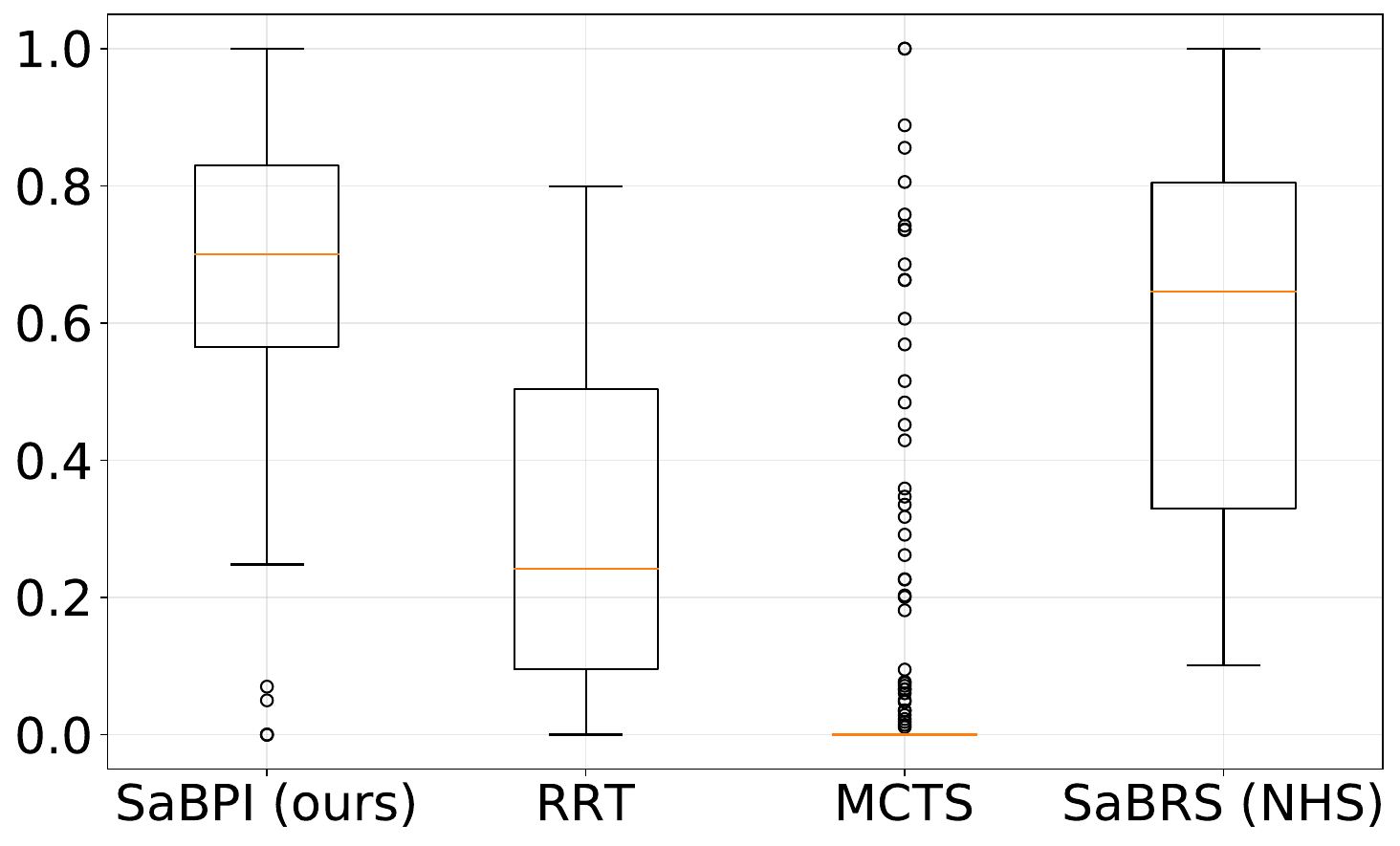}
        \caption{Fork}
        \label{fig:fork}
    \end{subfigure}
    \begin{subfigure}{0.23\textwidth}
        \centering
        \includegraphics[width=\linewidth, height=0.14\textheight]{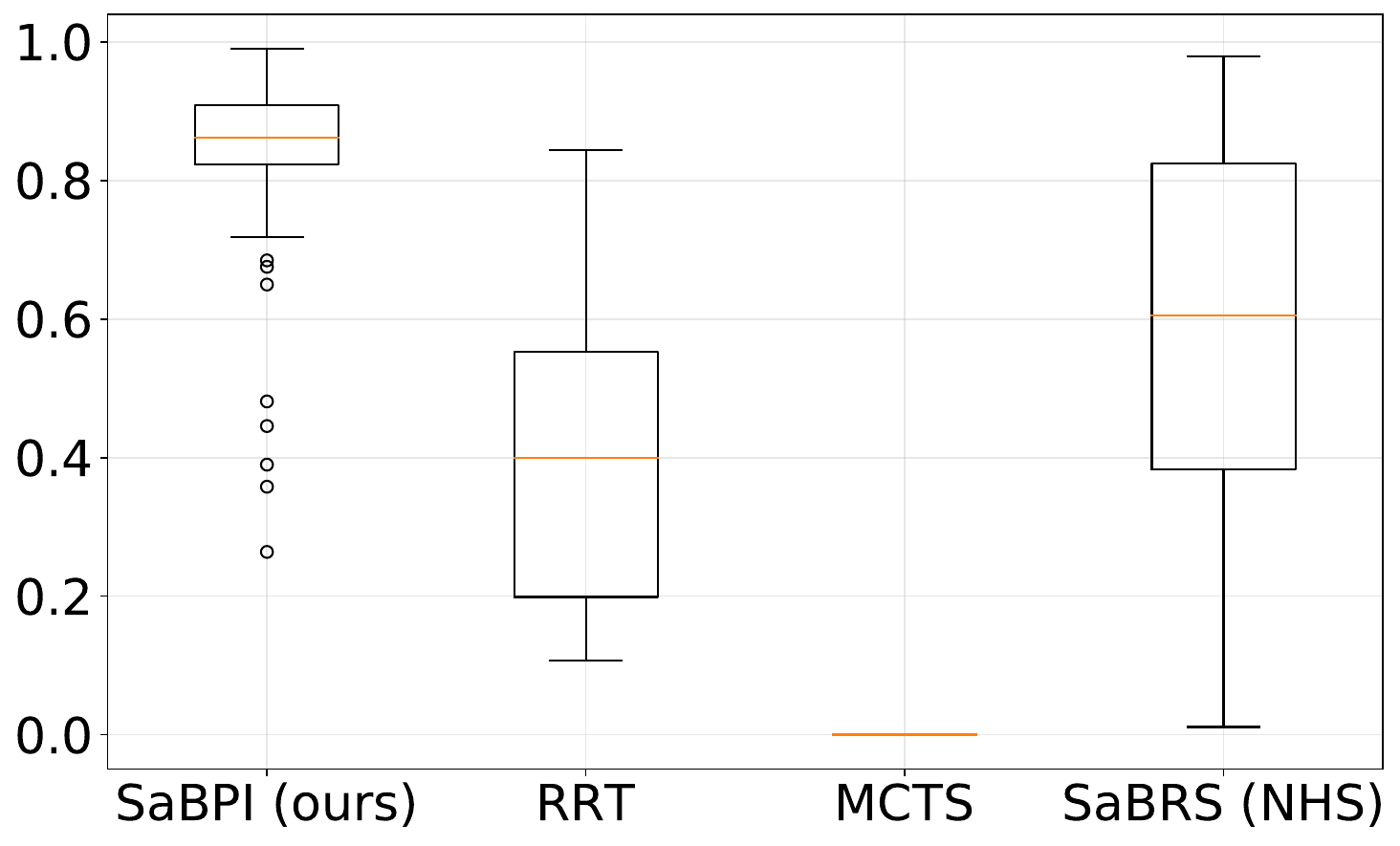}
        \caption{Continuous Rock-Sample}
        \label{fig:crs}
    \end{subfigure}
    \begin{subfigure}{0.23\textwidth}
        \centering
        \includegraphics[width=\linewidth, height=0.14\textheight]{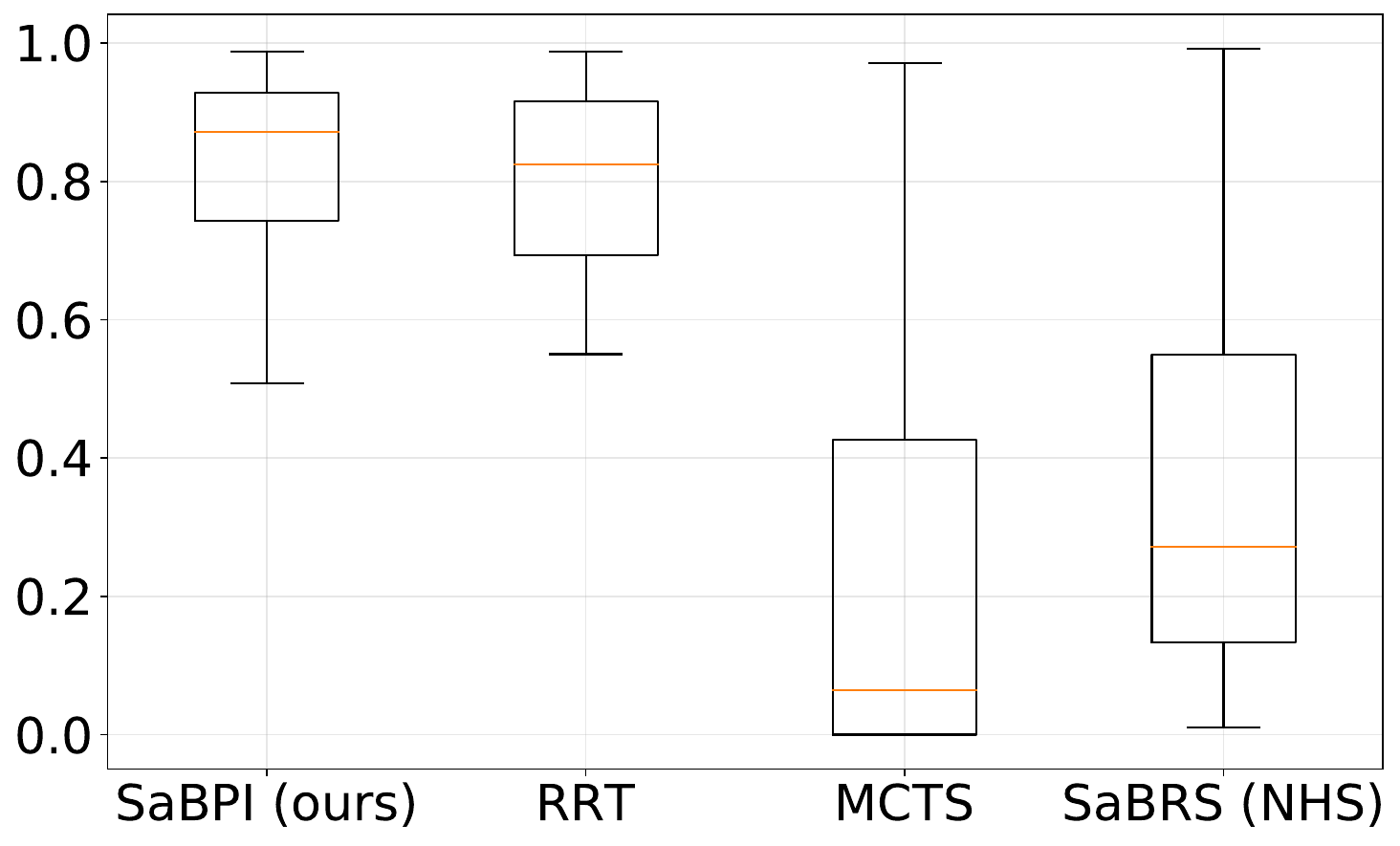}
        \caption{Fire Detection}
        \label{fig:firedetection}
    \end{subfigure}
    \caption{Benchmark Results of each algorithm for the four environments with a $60$ second time limit over $100$ trials.}
    \label{fig: benchmarks}
\end{figure*}

\begin{figure}[ht!]
    \centering
    \begin{subfigure}{0.5\columnwidth}
        \centering
        \includegraphics[width=\textwidth]{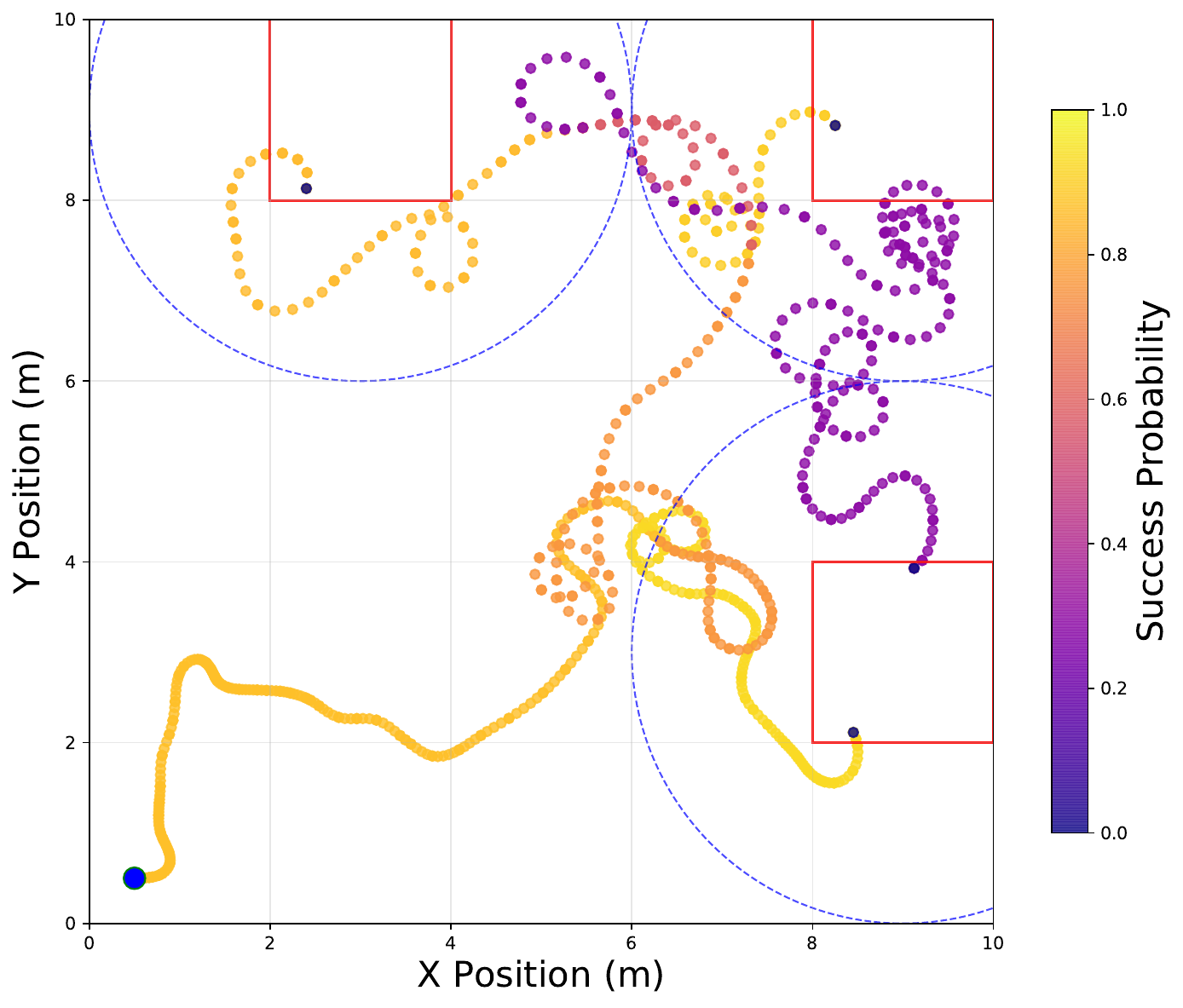}
        \caption{$b_0(good) = [0.5, 0.6, 0.7]$.}
        \label{fig: crs all good}
    \end{subfigure}%
    ~
    \begin{subfigure}{0.5\columnwidth}
        \centering
        \includegraphics[width=\linewidth]{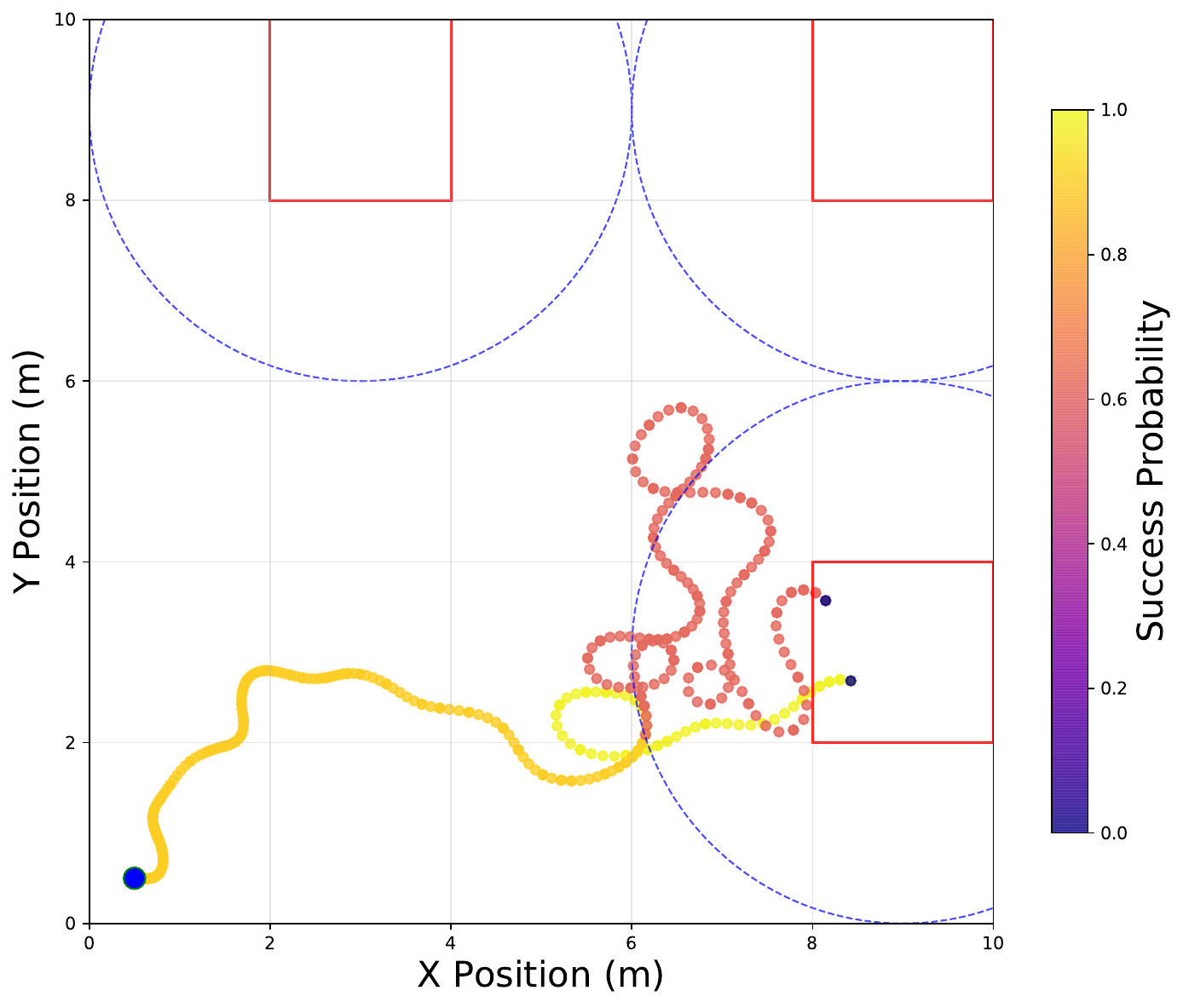}
        \caption{$b_0(good) = [0.2, 0.4, 0.9]$.}
        \label{fig: crs one good}
    \end{subfigure}
    \caption{C-Rock Sample with two different initial beliefs.}
\end{figure}

\begin{figure}[ht!]
    \centering
    \includegraphics[width=\linewidth, trim={7.0cm 4.0cm 7.0cm 4.0cm}, clip]{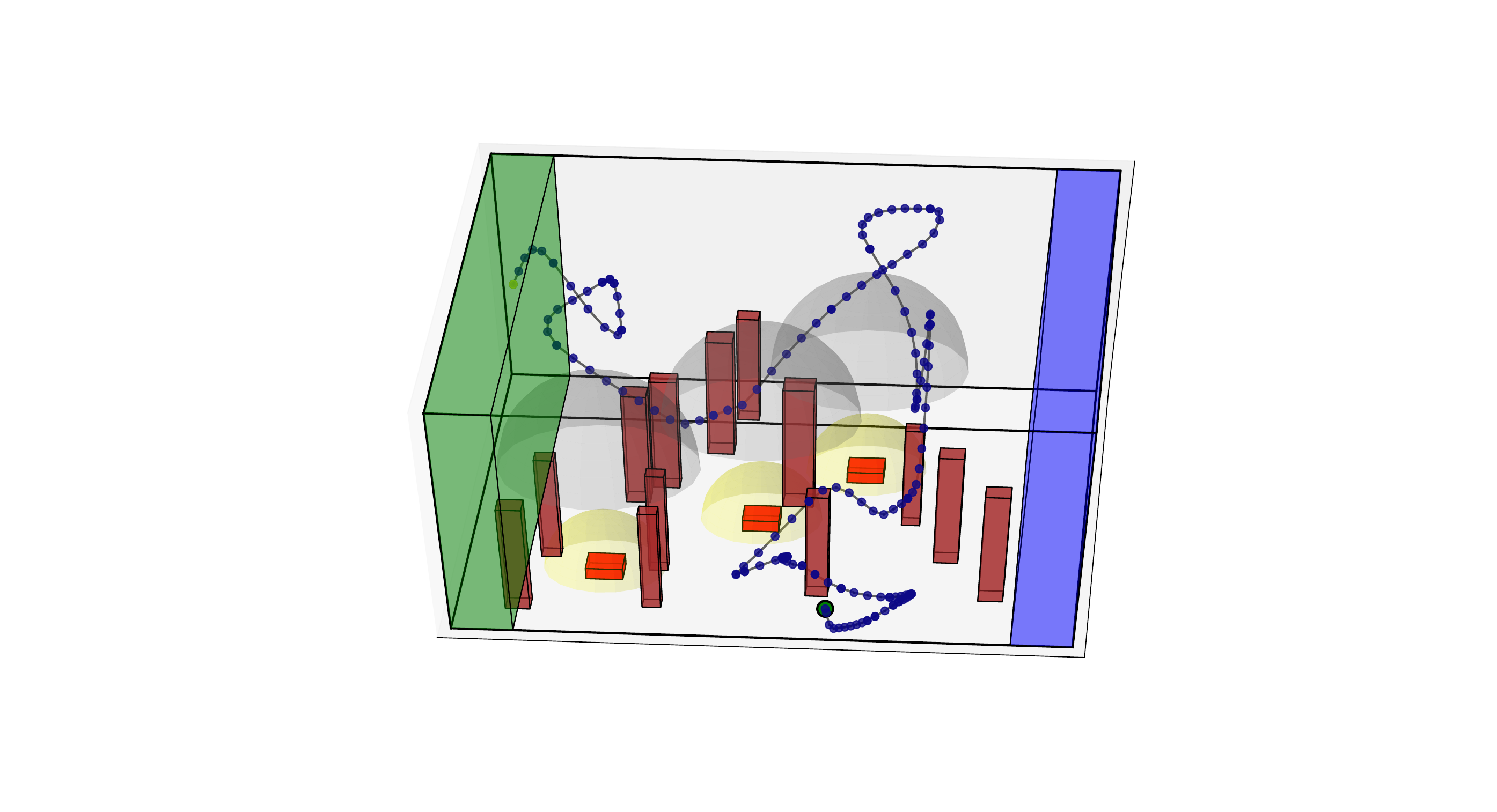}
    \caption{Drone Fire Detection Example: Exit A (green) and exit B (blue), fire sites (orange), obstacles (red). The yellow and gray hemispheres are observation regions.}
    \label{fig: drone fire}
\end{figure}

We considered the following instantiations of Problem~\ref{prob: policy}.

\begin{itemize}
    \item Door-Key: A robot with simple linear dynamics is tasked with finding the correct key in a set of regions before going to a door. The initial placement of the key is uncertain but observations are perfect. 
    The task is 
    \begin{align*}
        \phi = \eventually (correctkey) \wedge \eventually (door) \wedge \globally (door \rightarrow correctkey)
    \end{align*}
    \item Fork: A robot with second-order car dynamics is tasked with a reach-avoid objective to reach the exit while avoiding obstacles: $\phi = \neg obs U exit$. There are two potential paths (fork) to the exit, but there is uncertainty in whether it is traversable, which the robot can observe imperfectly when close to it.
    \item A continuous variant of rock sample (C-RS), Fig.~\ref{fig: crs one good}: A rover with second-order car dynamics and fuel constraints is tasked to collect a single rock and maximize the probability of it being a good rock. The task is 
    \begin{align*}
        \phi = \globally (fuel) \wedge \eventually (sample \implies good)
    \end{align*}\normalsize
    There is uncertainty in rock quality, and the robot can imperfectly observe its quality in a radius around it.
    \item Fire detection problem in Example~\ref{ex: drone}, Fig.~\ref{fig: drone fire}: 3D quadcopter dynamics tasked with detecting and reporting fire at potential sites. If there is a fire, it should exit to the left to report it, and if there is no fire, it should go to the right exit. Observations of smoke (less accurate observations) and site (more accurate observations) are imperfect. In $\ltlf$, the task is 
    \begin{align*}
        \globally (fire \rightarrow F(A)) \wedge \globally (\neg fire \rightarrow F(B)) \wedge \globally (\neg obs)
    \end{align*}
\end{itemize}

The optimal success probability is dependent on the belief and observation model. For the benchmarks, in all the environments, we randomize the initial belief distributions over the truth values of the atomic propositions in the uncertain regions. We ran each algorithm in each environment $100$ times, with a time limit of $60s$. 

\textbf{Benchmark Results: } Figure~\ref{fig: benchmarks} summarizes the probability of success of the computed policies. As seen from the results, \poalg performs the best among the compared algorithms. While each of the baselines occasionally exhibit strong performance in specific instances, they lack the consistency required in general for the variety of PO-SHS problems. \poalg provides a unified approach, combining the strategic policy selection of bandit-based stochastic decision-making with the efficient state-space exploration of RRT. This integration enables the rapid discovery of kinodynamically feasible paths while simultaneously optimizing for global task satisfaction in the hybrid belief space.

In the Door-Key example, in which there exist policies that can guarantee completion of the task with probability $1$, the worst-case semantics of \nhsalg enables it to find optimal solutions effectively. However, since \nhsalg does not use belief information, when it is necessary, especially in the other problems of Fork, C-RS, and Fire Detection, \nhsalg struggles to find policies with high probability of task satisfaction. When belief probability information is available, our framework enables to take advantage of that information and obtain the highest probability of task satisfaction.

From the performance of RRT and MCTS-PW, it is clear that neither a dominantly motion planning-based method nor a dominantly MCTS-based method is effective for finding for $\ltlf$ task satisfaction under propositional uncertainty. RRT is a motion planning algorithm that is able to search the state space efficiently. However, it does not have a good mechanism for optimizing for \emph{policies} that are conditioned on observations, causing conditional branching of the tree. Therefore, while it is able to find single trajectories with good probability of task satisfaction, it does not efficiently converge to good policies. MCTS-PW is a decision-making algorithm that optimizes for a policy tree directly. MCTS-PW does not perform well as it is not able to search the state space effectively without the presence of an efficient value estimate. However, since our problem is a temporal reachability problem with sparse rewards, MCTS-PW struggles to find good policies. Our algorithm \poalg uses combined decision-making selection of promising policies and the effective search of a state space with RRT in a single integrated algorithm, enabling fast exploration of the hybrid belief space while efficiently optimizing for task satisfaction.

\textbf{Illustrative Solutions: } We present illustrative solutions to demonstrate the effectiveness of the framework.

\textit{Continuous Rock Sample: } Figures~\ref{fig: crs all good} and \ref{fig: crs one good} show two different belief initializations of the quality of the rocks in the space. The robot starts in the bottom left (blue dot). The large blue circles are when the robot's sensors are close enough to observe, with $0.8$ probability, the quality of the nearest rock. in Fig.~\ref{fig: crs all good}, the robot plans a motion policy that visits all the rocks depending on the observations. In Fig.~\ref{fig: crs one good}, when there is one rock that has a very high initial belief of its quality, the robot decides to go directly to the sample that rock. The use of belief and observation modeling in our framework during planning enables \poalg to effectively optimize for adaptive policies with guarantees on satisfaction probabilities.

\textit{Fire Detection: } We show an example solution trajectory during execution of the Fire Detection problem in Fig.~\ref{fig: drone fire}, in which the computed policy has success probability of $0.93$. Initially, the robot has belief $b(fire) \approx [0.35, 0.84, 0.89]$ at the three potential sites. The robot plans a motion that goes to two of the sites to sense and updates its belief, and from both sites, it observes that there is a high probability of there being a fire. The robot then goes to exit A.

%% file: sections/Conclusion.tex
This paper considers the problem of linear temporal logic motion planning with uncertain environment semantics. We model the problem as a partially observable stochastic hybrid system, and propose an anytime, sound, and asymptotically optimal algorithm that maximizes task satisfaction probability. Empirical results show that the integration of strategic policy selection and SBMP enables efficient solutions with reachability guarantees. For future work, we plan to extend \poalg to handle uncertainty in continuous dynamics, and optimize both task satisfaction probability and another cost function.